\crefname{figure}{Figure}{Figures}
\newcommand{\ignore}[1]{}
\theoremstyle{plain}
\newtheorem{theorem}{Theorem}
\newtheorem{lemma}[theorem]{Lemma}
\newtheorem{corollary}[theorem]{Corollary}
\newtheorem*{theorem*}{Theorem}
\newtheorem*{lemma*}{Lemma}
\newtheorem*{corollary*}{Corollary}
\newtheorem*{proposition*}{Proposition}
\newtheorem*{claim*}{Claim}
\newtheorem*{fact*}{Fact}
\theoremstyle{definition}
\newtheorem{definition}{Definition}
\newtheorem*{definition*}{Definition}
\newtheorem*{remark*}{Remark}
\newtheorem*{example*}{Example}
\theoremstyle{plain}
\newtheorem*{theoremaux}{\theoremauxref}
\gdef\theoremauxref{1}
\DeclareMathAlphabet{\mathbfsf}{\encodingdefault}{\sfdefault}{bx}{n}
\DeclareMathOperator*{\argmin}{arg\,min}
\newcommand{\wt}[1]{\smash{\widetilde{#1}}}
\renewcommand{\O}{O}
\newcommand{\tO}{\wt{\O}}
\newcommand{\E}{\mathbb{E}}
\newcommand{\N}{\mathcal{N}}
\newcommand{\PP}{\mathcal{P}}
\newcommand{\R}{\mathbb{R}}
\let\nablaold\nabla
\renewcommand{\nabla}{\nablaold\mkern-2.5mu}
\newcommand{\Regret}{\mathrm{R}_T}
\newcommand{\rank}{\textrm{rank}}
\newcommand{\erank}{\epsilon\textrm{-rank}}
\title{Online Learning with Many Experts}
\author{
Alon Cohen \qquad Shie Mannor\\
{\normalsize Technion---Israel Institute of Technology}\\
\texttt{\normalsize \{alon.cohen@campus,shie@ee\}.technion.ac.il}
}
\begin{document}

\maketitle
\thispagestyle{empty}

\begin{abstract}
We study the problem of prediction with expert advice when the number of experts in question may be extremely large or even infinite. We devise an algorithm that obtains a tight regret bound of $\tO(\epsilon T + N + \sqrt{NT})$, where $N$ is the empirical $\epsilon$-covering number of the sequence of loss functions generated by the environment. 
In addition, we present a hedging procedure that allows us to find the optimal $\epsilon$ in hindsight.

Finally, we discuss a few interesting applications of our algorithm. We show how our algorithm is applicable in the approximately low rank experts model of \citealp{hazan2016online}, and discuss the case of experts with bounded variation, in which there is a surprisingly large gap between the regret bounds obtained in the statistical and online settings.
\end{abstract} 

\section{Introduction}

In this paper we study the well known problem of \emph{prediction with expert advice}, that can be seen as a game between a learner and an environment. At each round $t = 1,2,\ldots,T$, a learner randomly decides to take the advice of an expert $I_t$ from a predetermined set $X$ of experts. Simultaneously, the environment chooses a loss function $\ell_t : X \mapsto [-1,1]$ and afterwards the learner incurs~$\ell_t(I_t)$, the loss associated with that expert.

The goal of the learner throughout the $T$ rounds of the game is to minimize her expected regret, defined as the expected difference between her cumulative loss and the cumulative loss of the best fixed expert in hindsight:
\[
\Regret = \E \left[ \sum_{t=1}^T \ell_t(I_t) - \inf_{i \in X} \sum_{t=1}^T \ell_t(i) \right]~,
\]
where the expectation is taken over the random choices of the learner.

A fundamental result in the field of online learning states that, in the worst case, the best strategy for the learner incurs $O(\sqrt{T \log K})$ regret in the worst case \citep{cesa1997use}, where $K$ is the number of experts. However, in many natural problems the number of experts may be extremely large, possibly infinite, albeit their advices may be highly "correlated". In an extreme case, their advices may even be grouped together in a small number of clusters.

As a motivating application consider the problem of investment portfolio selection over a given set of stocks. These stocks may be categorized by a small number of parameters and each parameter may have a small number of possible values, but the overall number of combinations of these parameters can be very large. Thus every expert may advise on a different portfolio, but the portfolios themselves may give similar revenues due to the correlation resulted by the overlap in parametrization between the different stocks. 

Another possible application is when the experts themselves arise from a reduction between another online learning problem and prediction with expert advice. This can occur, for example, in online supervised learning \citep{ben2009agnostic}, adaptive algorithms \citep{van2016metagrad} and algorithms for tracking problems \citep{gyorgy2005tracking}, in which the resulting number of experts can easily become exponentially large in the parameters of the problem.

Our goal is to take advantage of such structure in order to achieve lower regret. In particular, we hope to be able to obtain regret bounds that are independent of the number of experts. We do so by constructing a cover of the sequence of loss functions generated by the environment. Namely, we find a small set of experts $S$ such at least one of them has a similar cumulative loss as the best expert in~hindsight.

Our results are motivated by the stochastic case, in which the loss functions are sampled i.i.d.~from a fixed distribution. In this case, a vast literature in the field of statistical machine learning shows that the regret is controlled by the covering number of the problem, giving a regret rate of $O(\epsilon T + \sqrt{T \log N})$ where $N$ is the $\epsilon$-covering number \citep{shalev2014understanding}. Additionally, this bound is attained by a simple successive ERM algorithm that is oblivious to the structure of the problem \citep{hazan2016online}.

Quite disappointingly, this is hardly the case in online learning, as even in extremely simple cases it is not possible to achieve logarithmic dependence on the covering number. This is due to the fact that as the game progresses the learner reveals more and more about the structure of the experts. However at any point in the game, this structure may develop in an exponentially large number of ways, depending on the whim of the environment and unbeknownst to the learner. In fact, in this setting any learner cannot do better that $\Omega(T)$ regret in the worst case (for example, if the number of experts is exponential in $T$), and yet against benign environments we can expect better performance.
 
\subsection{Our contributions}

In this work we study the problem of prediction with expert advice when the number of experts is very large and even possibly infinite. We show an online learning algorithm that can obtain a regret bound of $\tO(\epsilon T + \N(\epsilon, L_T) + \sqrt{T \N(\epsilon, L_T)})$ for a parameter $\epsilon$, where $L_T = (\ell_1,\ell_2,\ldots,\ell_T)$ is the sequence of loss functions generated by the environment and $\N(\epsilon, L_T)$ is the $\epsilon$-covering number of $L_T$ under the infinity-norm.
The algorithm does so by iteratively constructing a packing of the experts on $L_T$ as it is gradually revealed to the learner.

Additionally, we regard several applications and extensions of our algorithm. We explain how to find the optimal $\epsilon$ automatically without any prior knowledge and discuss a few applications of our algorithm. In particular, we discuss the case of binary losses and an application of our algorithm to the low rank expert model. We also consider the interesting case of experts with bounded variation, in which it is possible to achieve a regret bound independent on the number of experts in the stochastic case. Surprisingly, the case of the online setting is drastically different.

\subsection{Related work}

Covering and packing \citep{rogers1964packing} in compact metric spaces is a common discretization tool used in many fields of statistics. These include machine learning \citep{vapnik1998statistical}, empirical processes \citep{pollard1990empirical} and information theory \citep{roman1992coding}.
Additionally, papers in a few different topics are related to ours.

\paragraph{Online combinatorial optimization.}

Online combinatorial optimization \citep{koolen2010hedging} is a subset of online learning in which the learner's predictions form different kinds of combinatorial objects. These include the well known online shortest path problem \citep{takimoto2003path}, permutations \citep{helmbold2009learning} and many more.

For example, in the online shortest path problem, each expert corresponds to a path in a predetermined graph. At each round the environment sets losses to the edges of the graph, and the loss that corresponds to a path is simply the sum of the losses along the edges of the path.

In this case, even though the number of experts may be exponential in the number of vertices and edges of the graph, the losses of the experts are well structured. Indeed, if two paths share many of their edges then we can expect their losses to be similar.

\paragraph{Quantile bounds.}

Quantile bounds \citep{chaudhuri2009parameter,chernov2010prediction} are bounds of the form $O(\sqrt{T \log (1/\epsilon)})$ on the regret against the worst of a $1/\epsilon$-fraction of the "leading" experts, in which by leading we mean experts with least cumulative loss. However, these methods cannot guarantee nontrivial regret against \emph{every} expert without sufficient prior knowledge over the~environment.\footnote{More specifically, if the experts can be embedded into some function space, one needs to know the embedding a-priori.}

\paragraph{Branching experts.}

A conceptually similar setting to ours is that of branching experts \citep{gofer2013regret}, in which the learner starts from a small number of experts, and where at any point in the game every expert can split into an arbitrary number of experts. This, for example, can be used to model a setting in which $K$ potential experts are in fact only $k$ distinct experts, but these $k$ are not known by the learner in advance. In this case the authors prove that a regret of $\Theta(\sqrt{k T})$ is tight (assuming $k < \log_2 N$). Note that this is comparable to $\Theta(\sqrt{T \log k})$ obtained in the stochastic setting.

This setting is different to ours since: first, they assume that the number of experts is finite but may be very large; Secondly, the experts themselves branch whenever their losses differ, whereas our algorithm is oblivious to this information. All our algorithm needs to know is if there is some expert that is not covered by our current packing. However, note that their lower bound of~$\Omega(\sqrt{kT})$ is still applicable in our case.

\paragraph{Low rank experts.}

\cite{hazan2016online} consider the case where the losses of the experts reside in some unknown $d$-dimensional linear subspace. This is a special case of our setting in which the learning algorithm is much simpler and more intuitive. In particular they show a $O(d \sqrt{T})$ regret bound in this setting.

\paragraph{Simulatable experts.} 

In the simulatable experts model \citep{cesa1999prediction}, the learner knows in advance the advices of all of the experts, as she attempts to learn a binary sequence chosen by the environment. At each round, the learner predicts a binary value at random and suffer a loss according to the absolute loss. 

This model is one of improper learning, in the sense that the learner predicts a binary bit at each round rather than following the advice of a single expert. This allows the authors to obtain a tight characterization of the regret in terms of the Rademacher complexity of the expert class. As such, their bounds are very different than ours.

\paragraph{Sequential Rademacher complexity.}

Sequential Rademacher complexity \citep{rakhlin2010online} is a useful tool for obtaining regret bounds. However, unlike the classic Rademacher Complexity of statistical learning theory, it is often difficult to use. This is added to the fact that the regret bounds obtained using this method are non-algorithmic in nature.

\paragraph{Adaptive online algorithms.} 

Most of the work previously done in prediction with expert advice revolves around attempting to improve the dependence on $T$ under different assumptions.
These include algorithms that can adapt to data that varies slowly \citep{cesa2007improved,chiang2012online,hazan2010extracting,rakhlin2013online}, as well as algorithms that can adapt to stochastic i.i.d. losses \citep{de2014follow,sani2014exploiting}.
However, in our work we aim to improve the dependence on $K$ even at the cost of possibly hindering the dependence on $T$.

\section{Preliminaries and Main Results}

In this section we provide some preliminary information, discussing the Exponential Weights algorithm as well as defining the notions of covering and packing. Thereafter, we give our main~results.

\subsection{Exponential weights}
\begin{algorithm}
\caption{Exponential Weights \label{alg:expw}}
\begin{algorithmic}
\STATE \textbf{Parameters}: Number of experts $K$.
\STATE \textbf{Set}: $w_1(i) = 1$ for all $i = 1,2,\ldots,K$.
\FOR{t = 1,2,\ldots}
\STATE Define distribution $p_t$ by $p_t(i)~\propto~w_t(i)$.
\STATE Predict $I_t \sim p_t$ and suffer loss $\ell_t(I_t)$.
\STATE \textbf{Set}: $\eta_t = \sqrt{8 \log(K) / t}$.
\STATE \textbf{Update}: $w_{t+1}(i) = w_t(i) \exp(-\eta_t \ell_t(i))$ for all $i = 1,2,\ldots,K$.
\ENDFOR
\end{algorithmic}
\end{algorithm}
%
%
Exponential Weights \citep{littlestone1989weighted,vovk1995game,cesa1997use}, also named Hedge and Randomized Weighted Majority, is a celebrated algorithm for prediction with expert advice for a finite number of experts. The variant that we give here, depicted in \cref{alg:expw}, has an adaptive learning rate and therefore the algorithm does not need to know the length of the game $T$ in advance.

The algorithm assigns a weight for each expert that is initially set to 1. In each round, the algorithm chooses an expert at random from a distribution that is proportional to the weights of the experts, after which the weight of each expert is decreased according to her loss. 

We have the following guarantee on the regret of the algorithm.

\begin{lemma}[{\citealp[Theorem 2.3]{cesa2006prediction}}]
\label{lemma:expwanalaysis}
The expected regret of \cref{alg:expw} satisfies~$\Regret \le 4\sqrt{T \log K}$.
\end{lemma}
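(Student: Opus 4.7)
The plan is to use the classical Hedge potential argument, combined with a standard treatment of the adaptive learning rate $\eta_t = \sqrt{8\log K/t}$. First, I would bound the one-step behavior of the total weight $W_t = \sum_i w_t(i)$. The algorithm's update rule gives $W_{t+1}/W_t = \E_{i \sim p_t}\lrbra{e^{-\eta_t \ell_t(i)}}$, and since $\ell_t(i) \in [-1,1]$, Hoeffding's lemma yields
\[
\log \frac{W_{t+1}}{W_t} \le -\eta_t \E_{i \sim p_t}\lrbra{\ell_t(i)} + \frac{\eta_t^2}{2}.
\]

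Second, for any fixed comparator $i^\star$, the trivial lower bound $\log W_{T+1} \ge -\sum_{t=1}^T \eta_t \ell_t(i^\star)$ (since $w_{T+1}(i^\star)$ is one summand of $W_{T+1}$), combined with summing the previous display and taking expectations, yields a weighted-regret bound
\[
\sum_{t=1}^T \eta_t \lr{\EE{\ell_t(I_t)} - \ell_t(i^\star)} \le \log K + \tfrac{1}{2}\sum_{t=1}^T \eta_t^2.
\]
Converting this weighted inequality into an unweighted regret bound is the crux of the adaptive-rate analysis; the standard route is to track a slightly different potential such as $\Phi_t = -\frac{1}{\eta_t}\log\lr{\frac{1}{K}\sum_i w_t(i)}$ and use monotonicity of $\ell_p$-means in $p$ to control how $\Phi_t$ changes when $\eta_t$ shrinks, ultimately yielding $\Regret \le \log K/\eta_T + \tfrac{1}{2}\sum_t \eta_t$.

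Third, I would plug the prescribed schedule into this form and compute. Using $\sum_{t=1}^T t^{-1/2} \le 2\sqrt{T}$, the two terms become $\tfrac{1}{2}\sum_t \eta_t \le 2\sqrt{2 T \log K}$ and $\log K / \eta_T = \sqrt{T\log K/8}$, whose sum is $(2\sqrt{2} + 1/(2\sqrt{2}))\sqrt{T\log K} < 4\sqrt{T\log K}$, as claimed.

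The main obstacle is the adaptive-rate step: direct telescoping of $-\frac{1}{\eta}\log W_t$ only works for a constant $\eta$, so obtaining the clean $\frac{1}{2}\sum_t \eta_t + \log K/\eta_T$ form (rather than a looser bound with an extra $\log T$ factor) requires the right choice of time-dependent potential and a careful monotonicity-in-$\eta$ argument. Everything else is routine computation.
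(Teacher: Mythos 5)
The paper does not actually prove this lemma --- it is imported wholesale from \citet[Theorem~2.3]{cesa2006prediction} --- so there is no internal proof to compare against; what you have written is essentially a reconstruction of that textbook argument (Hoeffding's lemma on the log-weight increments, the comparator lower bound on $W_{T+1}$, and a power-mean monotonicity step to absorb the decreasing learning rate), and your final arithmetic correctly accounts for the loss range $[-1,1]$, landing under the constant $4$. As a proof plan this is the right one, and you correctly identify the adaptive-rate conversion as the only nontrivial step.

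One mismatch is worth flagging before you could call the argument complete. \cref{alg:expw} as transcribed uses the multiplicative update $w_{t+1}(i)=w_t(i)\exp(-\eta_t\ell_t(i))$, so that $w_{t+1}(i)=\exp\bigl(-\sum_{s\le t}\eta_s\ell_s(i)\bigr)$; your steps one and two derive the weighted-regret inequality for exactly this version. However, the potential $\Phi_t=-\tfrac{1}{\eta_t}\log\bigl(\tfrac{1}{K}\sum_i w_t(i)\bigr)$ together with the monotonicity of $\ell_p$-means is the device used in Cesa-Bianchi--Lugosi for the \emph{rescaled} variant, in which $w_t(i)\propto\exp\bigl(-\eta_t\sum_{s<t}\ell_s(i)\bigr)$, i.e., the current rate applied to the entire cumulative loss. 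The two pieces of your argument therefore analyze two different algorithms and do not splice together as written: the weighted inequality $\sum_t\eta_t\bigl(\E[\ell_t(I_t)]-\ell_t(i^\star)\bigr)\le\log K+\tfrac12\sum_t\eta_t^2$ cannot simply be divided by $\eta_T$, since the per-round regret terms may be negative. To close the gap, either carry out the whole analysis for the rescaled-weights variant (which is what Theorem~2.3 of the cited reference actually covers, and presumably what the authors intend), or give a dedicated argument for the multiplicative version (e.g., a doubling trick, at the cost of a slightly different constant).
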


\subsection{Covering and packing}

A cover of a sequence of loss functions is a small finite subset of experts, such that, intuitively, for any expert we can find an expert in the cover with similar losses.
\begin{definition}[Cover]
An $\epsilon$-cover of a sequence of loss functions $L_T = (\ell_1,\ell_2,\ldots,\ell_T)$ is a subset of experts $S$ that satisfies the following: 
for every expert $i$ (not necessarily in $S$) there is an expert $j \in S$ such that for all $t = 1,2,\ldots,T$ we have $|\ell_t(i) - \ell_t(j)| \le \epsilon$.
The $\epsilon$-\emph{covering number} of $L_T$, denoted $\N(\epsilon, L_T)$, is the size of the smallest $\epsilon$-cover of $L_T$.
\end{definition}
An important implication of the definition is that the covering number is monotonically decreasing in $\epsilon$.
The following definition is in some sense the dual of a cover. 
%
\begin{definition}[Packing]
An $\epsilon$-packing of a sequence of loss functions $L_T = (\ell_1,\ell_2,\ldots,\ell_T)$ is a subset of experts $S$ that satisfies the following: 
for every different experts $i, j \in S$ there is a $t \in [T]$ such that $|\ell_t(i) - \ell_t(j)| > \epsilon$.
The $\epsilon$-\emph{packing number} of $L_T$, denoted $\PP(\epsilon, L_T)$, is the size of the largest $\epsilon$-packing of $L_T$.
\end{definition}

This next lemma is a well known result about duality between covering and packing, however for completeness we shall provide its proof in \cref{sec:proofdualitylemma}. 

\begin{lemma}
\label{lemma:coverpackduality}
For any sequence of loss functions $L_T$ we have $\PP(2 \epsilon, L_T) \le \N(\epsilon, L_T) \le \PP(\epsilon, L_T)$.
\end{lemma}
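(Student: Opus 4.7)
The plan is to prove the two inequalities separately, using the standard maximality/injectivity arguments that appear in any metric space discretization result. Both inequalities are trivial when the right-hand side is infinite, so I will assume throughout that the relevant packing or cover is finite and focus on the interesting case.

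For the right inequality $\mathcal{N}(\epsilon, L_T) \le \mathcal{P}(\epsilon, L_T)$, I would take a \emph{maximum} $\epsilon$-packing $S$ and show that $S$ is automatically an $\epsilon$-cover. Suppose for contradiction some expert $i$ is not $\epsilon$-covered by $S$. By the definition of a cover, this means that for every $j \in S$ there exists a $t \in [T]$ with $|\ell_t(i) - \ell_t(j)| > \epsilon$. But this is exactly the packing separation condition between $i$ and every element of $S$, so $S \cup \{i\}$ is still an $\epsilon$-packing. Since $i \notin S$ (otherwise $i$ would be covered by itself), this packing is strictly larger, contradicting the maximality of $|S|$. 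Hence $S$ is an $\epsilon$-cover of size $\mathcal{P}(\epsilon, L_T)$, giving the bound.

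For the left inequality $\mathcal{P}(2\epsilon, L_T) \le \mathcal{N}(\epsilon, L_T)$, I would take a minimum $\epsilon$-cover $C$ and an arbitrary $2\epsilon$-packing $P$, and exhibit an injection $\phi : P \to C$. For each $i \in P$, use the covering property of $C$ to pick some $\phi(i) \in C$ with $|\ell_t(i) - \ell_t(\phi(i))| \le \epsilon$ for all $t$. If $\phi(i) = \phi(j) = c$ for distinct $i, j \in P$, then the triangle inequality yields
\[
|\ell_t(i) - \ell_t(j)| \leq |\ell_t(i) - \ell_t(c)| + |\ell_t(c) - \ell_t(j)| \leq 2\epsilon
\]
for every $t \in [T]$, contradicting the fact that $i, j$ lie in a $2\epsilon$-packing. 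Hence $\phi$ is injective and $|P| \le |C| = \mathcal{N}(\epsilon, L_T)$. Taking the supremum over packings $P$ gives the desired bound.

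There is no real obstacle here beyond being careful about when a maximum packing exists: if $\mathcal{P}(\epsilon, L_T) = \infty$, a maximum packing need not exist, but in that case the right inequality is vacuous, and similarly for the left. So the argument above handles all finite cases, and the infinite cases are immediate.
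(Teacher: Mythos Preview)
Your proposal is correct and follows essentially the same route as the paper's proof: a maximal $\epsilon$-packing is shown to be an $\epsilon$-cover by the ``add the uncovered point'' contradiction, and the inequality $\PP(2\epsilon,L_T)\le\N(\epsilon,L_T)$ is obtained by mapping each packing element to a nearest cover element and using the triangle inequality to show injectivity. Your explicit handling of the infinite cases is a small addition the paper omits, but otherwise the arguments coincide.
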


\subsection{Main results}
We now state the main results of the paper.
Our first result shows that there is an algorithm that obtains a regret bound that depends on the empirical covering number of the sequence of loss functions generated by the environment, with no direct dependence on the number of~experts.

\begin{theorem}
\label{thm:alganalysis}
Let $0< \epsilon \le 1$. Suppose that the environment generates a sequence of loss functions $L_T = (\ell_1,\ell_2,\ldots,\ell_T)$.
\cref{alg:expinf} described in \cref{sec:algorithm} attains an expected regret~of 
\[
\Regret = \tO \left(\epsilon T + \N(\epsilon, L_T) + \sqrt{T \N(\epsilon, L_T)} \right)
\]
\end{theorem}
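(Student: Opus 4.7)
The plan is to build an algorithm that incrementally maintains a maximal $2\epsilon$-packing $S_t$ of the loss sequence $\ell_1,\dots,\ell_t$ revealed so far, and runs a variant of Exponential Weights over this growing pool. Initialize $S_1$ to contain one arbitrary expert. At each round $t$, sample $I_t$ from an EW distribution over $S_t$, observe $\ell_t$, and then test whether some expert $i\in X$ fails to be $2\epsilon$-close to every $j\in S_t$ on the revealed losses (i.e.\ for every $j\in S_t$ there is some $s\le t$ with $|\ell_s(i)-\ell_s(j)|>2\epsilon$); if so, insert such an $i$ into $S_{t+1}$, otherwise set $S_{t+1}=S_t$. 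By construction $S_T$ is a $2\epsilon$-packing, and by maximality it is also a $2\epsilon$-cover of $L_T$.

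\textbf{Key decomposition.} I would split the regret into a learning term and an approximation term:
\[
\Regret \leq \biggl(\E\sum_{t=1}^{T}\ell_t(I_t) - \inf_{j\in S_T}\sum_{t=1}^{T}\ell_t(j)\biggr) + \biggl(\inf_{j\in S_T}\sum_{t=1}^{T}\ell_t(j) - \inf_{i\in X}\sum_{t=1}^{T}\ell_t(i)\biggr).
\]
The $2\epsilon$-cover property of $S_T$ bounds the approximation term by $2\epsilon T$. For the learning term, \cref{lemma:coverpackduality} gives $|S_T|\leq \PP(2\epsilon,L_T)\leq \N(\epsilon,L_T)\eqdef N$, so it suffices to control the regret of the growing-EW subroutine against the best expert in a pool of size at most $N$.

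\textbf{Main obstacle.} The chief technical difficulty lies in the EW analysis when experts appear online: if $j^\st$ is inserted at time $t_i$, we must still compete with its \emph{full} cumulative loss $\sum_{s=1}^{T}\ell_s(j^\st)$, not only its post-insertion losses. My plan for handling this is to initialize each newly added expert with weight equal to a constant multiple of the largest weight currently in $S_{t_i}$, so that at insertion $j^\st$ inherits the ``catch-up'' weight of whichever $j\in S_{t_i-1}$ was tracking it; the packing/cover property guarantees such a $j$ exists with $|\ell_s(j^\st)-\ell_s(j)|\leq 2\epsilon$ for all $s<t_i$, contributing at most an extra $2\epsilon T$ that is absorbed into the $\epsilon T$ term. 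A potential-function argument on $\Phi_t=\sum_{j\in S_t}w_t(j)$ with the adaptive learning rate of \cref{alg:expw} should then yield $\tO(\sqrt{T|S_T|}+|S_T|)$ regret against $\inf_{j\in S_T}\sum_t\ell_t(j)$: each of the at most $N$ insertions adds an $O(1)$ jump to $\log\Phi_t$ (contributing the additive $|S_T|$ term), while the usual Hoeffding-style bound on $\log(\Phi_{t+1}/\Phi_t)$ delivers the $\sqrt{T|S_T|}$ term---notice the $\sqrt{T|S_T|}$ rather than $\sqrt{T\log|S_T|}$, consistent with the $\Omega(\sqrt{kT})$ lower bound from the branching-experts setting. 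Combining the two pieces gives the claimed $\tO(\epsilon T + N + \sqrt{TN})$ bound.
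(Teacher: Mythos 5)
Your architecture matches the paper's up to one crucial design choice. Both build a maximal $2\epsilon$-packing $S$ online, both use \cref{lemma:coverpackduality} to bound $|S_T|$ by $\N(\epsilon,L_T)$, and both pay $O(\epsilon T)$ for approximating the true comparator by a member of $S$. Where you diverge is in how the learner competes with an expert that enters $S$ mid-game: the paper simply \emph{restarts} \cref{alg:expw} at every insertion, compares each phase to a (possibly different) covering expert $i_r$ chosen fresh for that phase, and pays for the restarts via Cauchy--Schwarz and \cref{lemma:logsum}, which is where $\sqrt{TN\log N}$ and the additive $N$ (from the $p\le K_p$ restart rounds) come from. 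You instead avoid restarts by injecting a ``catch-up'' weight, and this is where your argument breaks.

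The gap is in the claim that the weight inheritance contributes ``at most an extra $2\epsilon T$.'' To lower-bound the final weight of the best expert $j^\st$ (inserted at $t^\st$), you set $w_{t^\st}(j^\st)\ge c\,w_{t^\st}(j_1)$ where $j_1$ tracks $j^\st$ on $[1,t^\st)$. But $w_{t^\st}(j_1)$ only reflects $\exp(-\eta\sum_{s<t^\st}\ell_s(j_1))$ if $j_1$ was present from round $1$; if $j_1$ was itself inserted at some $t_1>1$, its weight was in turn inherited from some $j_2$ tracking $j_1$ (not $j^\st$) on $[1,t_1)$, and unrolling this chain gives $|\ell_s(j_k)-\ell_s(j^\st)|\le 2k\epsilon$ on the $k$-th interval. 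Since the chain can have depth up to $|S_T|$ and the earliest (largest-multiplier) interval can be the longest, the total slack is $\Theta(\N(\epsilon,L_T)\cdot\epsilon T)$ in the worst case, not $2\epsilon T$ --- a strictly weaker bound than the theorem claims. The approach is salvageable, but by a different accounting: e.g., lower-bound the injected weight by $c\,\Phi_{t^\st}/|S_{t^\st}|$ and recurse on the potential $\Phi$, at each step selecting the expert that tracks the \emph{final} comparator $j^\st$ on that interval alone (a sleeping-experts-style argument, costing an extra $N\log N$ inside the square root), which is morally what the paper's restart-per-phase analysis accomplishes. Two smaller issues: inserting only one uncovered expert per round (rather than the paper's while-loop) breaks both the maximality of $S_T$ and the existence of a single tracker on all of $[1,t^\st)$; and the learning rate of \cref{alg:expw}, $\eta_t=\sqrt{8\log(K)/t}$, targets $\sqrt{T\log N}$, whereas your $O(N)$ potential jumps require $\eta\approx\sqrt{|S_t|/t}$ with $|S_t|$ growing and unknown in advance --- a nontrivial adaptivity issue your sketch does not address.
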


Unlike \cref{alg:expw}, our algorithm is applicable even when the number of experts is infinite. Nonetheless, note that if the number of experts is, say $K$, then the bound above becomes meaningful when $\N(\epsilon, L_T) < \log K$. Finally, the bound of \cref{thm:alganalysis} is tight due to a matching lower bound found in \citet[Lemma 10]{gofer2013regret}.

Our next result handles the case where the optimal accuracy $\epsilon$ is not known in advance. Luckily, there is a simple procedure that can guarantee the same regret bound as if the optimal $\epsilon$ is known.

\begin{corollary}
\label{corr:tuneepsilon}
Suppose that the environment generates a sequence $L_T = (\ell_1,\ell_2,\ldots,\ell_T)$ of loss functions. There exists an online learning algorithm, described in \cref{sec:tune}, whose regret is bounded as follows.
\[
\Regret = \inf_{0 < \epsilon \le 1} \tO \left( \epsilon T + \N(\epsilon, L_T) + \sqrt{T \N(\epsilon, L_T) }\right)
\]
\end{corollary}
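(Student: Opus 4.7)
The plan is to apply \cref{alg:expw} as a second, outer layer on top of $O(\log T)$ copies of the base algorithm from \cref{thm:alganalysis}, each initialized with a different candidate value of $\epsilon$. Concretely, I would discretize the relevant range on a geometric grid $\epsilon_k \eqdef 2^{-k}$ for $k = 0, 1, \ldots, K'$ with $K' \eqdef \lceil \log_2 T \rceil$, and run one copy $A_k$ of the algorithm from \cref{thm:alganalysis} with parameter $\epsilon_k$. The outer algorithm treats the $K'+1$ copies as ``meta-experts'': at each round $t$ it draws an index $k_t$ from its Hedge distribution, outputs whatever expert $A_{k_t}$ recommends, and then feeds the common loss $\ell_t$ to \emph{all} base algorithms (legitimate in the full-information model, since every $A_k$ only needs the loss sequence to update). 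The feedback used by the outer Hedge for meta-expert $k$ at round $t$ is the random variable $\ell_t(I_t^{(k)})$, where $I_t^{(k)}$ is the expert sampled internally by $A_k$.

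Two regret terms then combine. First, by \cref{lemma:expwanalaysis} applied to the meta-level Hedge over $K'+1 = O(\log T)$ meta-experts, the outer algorithm incurs at most $4\sqrt{T \log(K'+1)} = \tO(\sqrt{T})$ regret relative to any single $A_k$. Second, fix an arbitrary $\epsilon^\st \in (0,1]$ and let $k = k(\epsilon^\st)$ be the smallest grid index with $\epsilon_k \ge \epsilon^\st$, so $\epsilon^\st \le \epsilon_k \le 2\epsilon^\st$. Since $\N(\cdot, L_T)$ is monotonically non-increasing in its radius, $\N(\epsilon_k, L_T) \le \N(\epsilon^\st, L_T)$, and \cref{thm:alganalysis} applied to $A_k$ yields
\[
\tO\!\lr{\epsilon_k T + \N(\epsilon_k, L_T) + \sqrt{T\,\N(\epsilon_k, L_T)}} \eq \tO\!\lr{\epsilon^\st T + \N(\epsilon^\st, L_T) + \sqrt{T\,\N(\epsilon^\st, L_T)}}.
\]
Adding the two contributions and noting that $\sqrt{T\,\N(\epsilon^\st, L_T)} \ge \sqrt{T}$ absorbs the meta-overhead, then taking the infimum over $\epsilon^\st$ produces the claimed bound. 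Boundary values $\epsilon^\st < 1/T$ are handled by using $k = K'$, where $\epsilon_k T = O(1)$ is dominated by $\sqrt{T\,\N}$.

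The calculations here are mostly routine; the main conceptual point that needs justification is that the regret guarantee of \cref{thm:alganalysis} continues to apply to each parallel instance $A_k$, despite the fact that $A_k$'s recommendation is only occasionally executed by the outer learner. This holds because \cref{thm:alganalysis} bounds the expected loss incurred by $A_k$'s internal draws $I_t^{(k)}$ against the environment's loss sequence $L_T$, a quantity that depends only on $L_T$ and the internal randomness of $A_k$, not on the learner's external actions. Hence running the copies in parallel is sound and the two-layer reduction goes through cleanly.
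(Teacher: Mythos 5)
Your proposal is correct and takes essentially the same route as the paper: a geometric grid $\epsilon_k = 2^{-k}$ of $O(\log T)$ candidate accuracies, one instance of \cref{alg:expinf} per candidate run in parallel on the common loss sequence, an outer copy of \cref{alg:expw} over these meta-experts contributing $\tO(\sqrt{T})$ which is absorbed by $\sqrt{T\,\N}$, and monotonicity of $\N(\cdot,L_T)$ to round an arbitrary $\epsilon^\star$ to the grid. The only nit is that you want the \emph{largest} grid index $k$ with $\epsilon_k \ge \epsilon^\star$ (not the smallest, which would give $\epsilon_k = 1$), as your own conclusion $\epsilon^\star \le \epsilon_k \le 2\epsilon^\star$ requires.
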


Additionally, in \cref{sec:applications} we consider a number of applications of our algorithm.
We remark on the case of binary losses, we show an application of our algorithm to the setting of low rank experts, discuss the case of losses from a sparse dictionary and consider the case of experts with bounded variation. 
For a sequence of loss functions $\ell_1,\ell_2,\ldots,\ell_T$, let us define the variation of expert $i$ as
\[
V(i) = \sum_{t=1}^{T-1} \left|\ell_{t+1}(i) - \ell_t(i) \right|~.
\]
We show that when the variation of all experts is at most $V$, even though in the stochastic i.i.d case it is possible to obtain $O(\sqrt{VT})$ regret, this hardly the case in the online setting. In particular, even if $V = O(1)$ and the losses are binary --- the loss of every expert only changes once during the game --- the regret still grows with the number of experts in the worst case.

\section{Algorithm}
\label{sec:algorithm}
Our algorithm is depicted in \cref{alg:expinf}.
\begin{algorithm}
\caption{Exponential Weights for Many Experts \label{alg:expinf}}
\begin{algorithmic}
\STATE \textbf{Parameters}: Number of rounds $T$, accuracy $\epsilon \in (0,1]$.
\STATE \textbf{Set}: $\tau_1 = 0, r = 1, K_1 = 1, w_1(1) = 1$ and let $S$ contain an arbitrary expert.
\FOR{t = 1,2,\ldots,T}
\STATE Define distribution $p_t$ by $p_t(i)~\propto~w_t(i)$.
\STATE Predict $I_t \sim p_t$ and suffer loss $\ell_t(I_t)$.
\WHILE{some expert $j$ has $|\ell_t(j) - \ell_t(i)| > 2 \epsilon$ for all $i \in S$}
\STATE Add $j$ to $S$.
\ENDWHILE
\IF{experts were added during this round}
\STATE Let $K_{r+1} = |S|$ be current number of experts.
\STATE \textbf{Set}: $w_{t+1}(i) = 1$ for all $i \in S$ (perform a restart), $\tau_{r+1} \gets t, r \gets r+1$.
\ELSE
\STATE \textbf{Set}: $\eta_t = \sqrt{8 \log(K_r) / (t - \tau_r)}$.
\STATE \textbf{Update}: $w_{t+1}(i) = w_t(i) \exp(-\eta_t \ell_t(i))$ for all $i \in S$.
\ENDIF
\ENDFOR
\STATE \textbf{Set}: $\tau_{r+1} = T+1$.
\end{algorithmic}
\end{algorithm}
The algorithm starts with a set $S$ containing one expert, and gradually builds a $2 \epsilon$-packing of experts on the sequence of loss functions generated by the environment. Namely, whenever there is an expert whose loss is more that $2 \epsilon$ away from all of the experts in $S$, the algorithm adds her to $S$. 

This produces two guarantees for us. The first, is that when $S$ is not updated, the loss of any expert not in $S$ is at most $2 \epsilon$ apart from the loss of one of the experts in $S$. Second, as our regret bound will depend on the size of $S$, \cref{lemma:coverpackduality} entails that by the end of the game, the size of $S$ is at most that of an $\epsilon$-cover of the sequence.

Additionally, when $S$ is not updated the algorithm behaves exactly the same as \cref{alg:expw}. Nonetheless whenever $S$ is updated, the algorithm performs a restart: it resets the weights of the experts as well as the learning rate $\eta_t$ accordingly.

\subsection{Analysis}
\begin{proof}[Proof of \cref{thm:alganalysis}]
First note that \cref{alg:expinf} acts in $p$ phases between restarts. This means that during each phase the algorithm behaves exactly like \cref{alg:expw} with the $K_r = |S|$ chosen~experts.

For any expert $i$, consider the sequence $i_1,i_2,\ldots,i_p$ of experts that cover $i$ in each of the phases, namely within each phase $r$ we have $|\ell_t(i_r) - \ell_t(i)| \le 2 \epsilon$. Note that $T_r = \tau_{r+1} - \tau_r$ is the length of the $r$'th phase, then by \cref{lemma:expwanalaysis} the regret of the algorithm during this phase with respect to $i_r$ is:
\[
\mathrm{R}_r \le 4 \sqrt{T_r \log K_r}~.
\]

Additionally, during the phase the loss of expert $i$ is at most $2 \epsilon$ away from that of expert $i_r$, and in-between phases we have a single round in which the instantaneous regret is at most~2. Therefore, the regret of \cref{alg:expinf} with respect to $i$ is bounded as follows:
\begin{align}
\E \left[\sum_{t=1}^T \ell_t(I_t) - \ell_t(i) \right] &= \E \left[ \sum_{r=1}^p \sum_{t=\tau_r + 1}^{\tau_{r+1}-1} \ell_t(I_t) - \ell_t(i) + \sum_{r=2}^p \underbrace{\ell_{\tau_r}(I_t) - \ell_{\tau_r}(i)}_{\le 2} \right] \nonumber \\
&\le \sum_{r=1}^p \E \left[ \sum_{t=\tau_r + 1}^{\tau_{r+1}-1} \ell_t(I_t) - \ell_t(i) \right] + 2p~, \label{eq:algregret}
\end{align}
and for each $r$ we have
\begin{align*}
\E \left[ \sum_{t=\tau_r + 1}^{\tau_{r+1}-1} \ell_t(I_t) - \ell_t(i) \right] &= \underbrace{\E \left[ \sum_{t=\tau_r + 1}^{\tau_{r+1}-1} \ell_t(I_t) - \ell_t(i_r) \right]}_{ = \mathrm{R}_r} + \sum_{t=\tau_r + 1}^{\tau_{r+1}-1} \underbrace{\ell_t(i_r) - \ell_t(i)}_{\le 2 \epsilon} \nonumber \\
&\le \mathrm{R}_r + 2 \epsilon (\tau_{r+1} - \tau_r)~. 
\end{align*}
Summing over all $r = 1,2,\ldots,p$ we get 
\begin{align}
\sum_{r=1}^p \mathrm{R}_r + \epsilon (\tau_{r+1} - \tau_r) &\le \sum_{r=1}^p 2 \sqrt{T_r \log K_r} + 2 \epsilon (\tau_{p+1} - \tau_1) \nonumber \\
&\le 4 \sqrt{\sum_{r=1}^p T_r} \sqrt{ \sum_{r=1}^p \log K_r}  + 2 \epsilon T \nonumber \\
&\le 8 \sqrt{T K_p \log K_p}  + 2 \epsilon T~, \label{eq:algregretsum}
\end{align}
where the second inequality is by the Cauchy-Schwartz inequality and since $K_1,K_2,\ldots,K_p$ is an increasing sequence. The third inequality is since $\sum_{r=1}^p T_r \le T$ and since $\sum_{r=1}^p \log K_r \le 2 K_p \log K_p$ by \cref{lemma:logsum} (technical).

Finally, we notice that the $K_p$ experts in $S$ at end of the game form a $2\epsilon$-packing of the sequence $\ell_1,\ell_2,\ldots,\ell_T$. Indeed, let $i,j \in S$ be two experts, suppose that $i$ is added to $S$ before $j$, and let $t$ be the round in which $j$ is added to $S$. Then by the definition of the algorithm, we must have $|\ell_t(i) - \ell_t(j)| > 2 \epsilon$. Thus by \cref{lemma:coverpackduality} we have $K_p \le N$. In addition, whenever the algorithm performs a restart it adds at least one expert to $S$, and therefore we have $p \le K_p$.
Combining these facts with \cref{eq:algregret} and with \cref{eq:algregretsum} gives the desired result.
\end{proof}

\section{Tuning $\epsilon$ Automatically}
\label{sec:tune}

In this section we prove \cref{corr:tuneepsilon}.
Suppose that we do not know what the optimal $\epsilon$ is in advance. Looking at the regret bound of \cref{thm:alganalysis}, we have a tradeoff --- choosing a smaller $\epsilon$ may decrease the $\epsilon T$ term but may increase the covering number $\N(\epsilon, L_T)$. We would like to tune $\epsilon$ to be the best possible in hindsight. 

In this case we can run $R = \lceil \log_2 T \rceil$ copies of our algorithm with exponentially decreasing accuracy parameters; for each algorithm $r = 1,2,\ldots,R$ we set $\epsilon_r = 2^{-r+1}$. By treating these $R$ algorithms themselves as experts, we can run a copy of \cref{alg:expw}, such that whenever it chooses an algorithm $r$, we play the action chosen by algorithm $r$ on this round. 
For this procedure we have the following analysis.

\begin{proof}[Proof of \cref{corr:tuneepsilon}]
Let $L_r$ be the cumulative loss of algorithm $r$, let $L$ be the cumulative loss of our procedure and $L^\star$ be the cumulative loss of the best expert in hindsight. Then by \cref{lemma:expwanalaysis} and \cref{thm:alganalysis} the regret of this procedure is bounded as follows.
\begin{align*}
\E \left[ L - L^\star \right] &= \E \left[ \left(L - \min_{r \in [R]} L_r \right) + \left( \min_{r \in [R]} L_r - L^\star \right) \right] \\
&\le \E \left[ L - \min_{r \in [R]} L_r \right] + \min_{r \in [R]} \E \left[ L_r - L^\star \right] \\
&\le 4 \sqrt{T \log \log T}~+ \\
&\qquad \min_{r \in [R]} \left\lbrace 4 \cdot 2^{-r} T + 2 \N(2^{-r+1}, L_T) + 8 \sqrt{T \N(2^{-r+1}, L_T) \log \N(2^{-r+1}, L_T)} \right\rbrace~,
\end{align*}
where the first inequality is due to Jensen's inequality.
To complete the proof, we will show that
\begin{align*}
&\min_{r \in [R]} \left\lbrace 4 \cdot 2^{-r} T + 2\N(2^{-r+1}, L_T) +  8 \sqrt{T \N(2^{-r+1}, L_T) \log \N(2^{-r+1}, L_T)} \right\rbrace \\
&\qquad \le \inf_{0<\epsilon\le 1} \left\lbrace 4 \epsilon T + 2\N(\epsilon, L_T) + 8 \sqrt{T \N(\epsilon, L_T) \log \N(\epsilon, L_T)} \right\rbrace + 2~.
\end{align*}
Indeed, consider any $\epsilon \in (0,1]$. If $\epsilon > 2^{-R+1}$, then let $r^\star$ be maximal such that $2^{-r^\star+1} \ge \epsilon$. In particular we have $2^{-r^\star} < \epsilon$. We get
\begin{align*}
&\min_{r \in [R]} \left\lbrace 4 \cdot 2^{-r} T + 2\N(2^{-r+1}, L_T) + 8 \sqrt{T \N(2^{-r+1}, L_T) \log \N(2^{-r+1}, L_T)} \right\rbrace \\
&\qquad \le 4 \cdot 2^{-r^\star} T + 2\N(2^{-r^\star+1}, L_T) + 8 \sqrt{T \N(2^{-r^\star+1}, L_T) \log \N(2^{-r^\star+1}, L_T)} \\
&\qquad \le 4 \epsilon T + 2\N(\epsilon, L_T) + 8 \sqrt{T \N(\epsilon, L_T) \log \N(\epsilon, L_T)}~,
\end{align*}
where we have used the fact that the covering number is monotonically decreasing in $\epsilon$.
On the other hand, if $\epsilon \le 2^{-R+1}$ then
\begin{align*}
&\min_{r \in [R]} \left\lbrace 4 \cdot 2^{-r} T + 2\N(2^{-r+1}, L_T) + 8 \sqrt{T \N(2^{-r+1}, L_T) \log \N(2^{-r+1}, L_T)} \right\rbrace \\
&\qquad \le 4 \cdot 2^{-R} T + 2\N(2^{-R+1}, L_T) + 8 \sqrt{T \N(2^{-R+1}, L_T) \log \N(2^{-R+1}, L_T)} \\
&\qquad \le 2 + 2\N(\epsilon, L_T) + 8 \sqrt{T \N(\epsilon, L_T) \log \N(\epsilon, L_T)}~,
\end{align*}
thus reaching the desired conclusion.
\end{proof}

\section{Applications}
\label{sec:applications}

In this section we discuss a number of applications of \cref{alg:expinf}. First, we discuss the case of binary losses in which we can attain a regret bound that depends on the number of experts with distinct sequences of losses. We then discuss an application of our algorithm to the low rank experts model and a generalization of it in the form of experts whose losses are acquired from a sparse dictionary. Finally, we approach the interesting case of experts with bounded variation, in which we show a large gap between the regret of the stochastic and the online~settings.

\subsection{Binary losses}

Consider a setting in which the losses of the experts are binary, namely -1 or 1. In this case, we can obtain a regret bound that depends on the number experts with distinct sequences of losses, without paying for an additional term that depends linearly on $T$. 
The following is a direct consequence of \cref{thm:alganalysis}.

\begin{corollary}
\label{corr:binary}
Let $N = \N(0, L_T)$ be the $0$-covering number of sequence $L_T = (\ell_1,\ell_2,\ldots,\ell_T)$ of binary loss functions generated by the environment. Then the regret of the \cref{alg:expinf} satisfies
\[
\Regret \le N + 8 \sqrt{T N \log N}~.
\]
\end{corollary}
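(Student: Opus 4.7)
The plan is to invoke Algorithm 2 with any accuracy parameter $\epsilon \in (0,1)$ and exploit the discreteness of binary losses to eliminate the $\epsilon T$ slack appearing in Theorem 1. The key observation is that when $\ell_t$ takes values in $\{-1,+1\}$, the quantity $|\ell_t(i) - \ell_t(j)|$ is either $0$ or $2$. Hence for every $\epsilon \in (0,1)$ the packing condition $|\ell_t(j)-\ell_t(i)| > 2\epsilon$ in the algorithm's while loop simplifies to $\ell_t(i) \ne \ell_t(j)$, and the in-phase covering guarantee $|\ell_t(i)-\ell_t(i_r)| \le 2\epsilon$ actually forces $\ell_t(i) = \ell_t(i_r)$ on the nose for every $t$ within the phase.

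Next I would re-run the analysis of Theorem 1 with this observation in hand. The set $S$ maintained by the algorithm becomes a genuine $0$-packing (equivalently, a maximal set of experts with pairwise distinct loss sequences), so its terminal size satisfies $K_p = \N(0,L_T) = N$. In the per-phase decomposition
\[
\E\left[\sum_{t=\tau_r+1}^{\tau_{r+1}-1} \ell_t(I_t) - \ell_t(i)\right] = \mathrm{R}_r + \sum_{t=\tau_r+1}^{\tau_{r+1}-1} \bigl(\ell_t(i_r) - \ell_t(i)\bigr),
\]
the second sum vanishes identically rather than being bounded by $2\epsilon(\tau_{r+1}-\tau_r)$. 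Summing over phases exactly as in the proof of Theorem 1, and reusing the inequalities $\sum_r \log K_r \le 2 K_p \log K_p$, $\sum_r T_r \le T$, and $p \le K_p$, one obtains
\[
\Regret \le 8\sqrt{T K_p \log K_p} + 2p \le 8\sqrt{T N \log N} + 2N,
\]
which matches the claimed bound up to an unimportant constant in front of $N$.

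The only real subtlety is that Theorem 1 cannot be applied at $\epsilon = 0$ as a black box — so the corollary is not a mechanical instantiation but a light refinement of the theorem's proof where the binary structure collapses the $\epsilon T$ term to $0$. No additional tools beyond this observation are required.
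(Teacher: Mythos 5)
Your argument is correct and is essentially the intended one: the paper gives no separate proof, stating only that the corollary is a direct consequence of \cref{thm:alganalysis}, and your instantiation of that theorem's proof with an arbitrary $\epsilon \in (0,1)$ --- using that binary losses turn the $2\epsilon$-packing into a $0$-packing and collapse the $\epsilon T$ slack to zero, while $\N(\epsilon,L_T)=\N(0,L_T)=N$ for all such $\epsilon$ --- is exactly how that consequence is obtained. The only discrepancy is the constant in front of $N$ (the $2p$ restart term gives $2N$ rather than the stated $N$), which you correctly flag as immaterial.
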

%
%

\subsection{Low rank experts}

Suppose that the number of experts $K$ is finite but very large. Let $L \in [-1,1]^{T \times K}$ be the losses arranged in a matrix obtained in hindsight, and consider the model of \cite{hazan2016online} in which the environment plays a strategy in which the $\epsilon$-approximate rank of $L$ is $d$. 
The approximate rank of a matrix $L$ is defined as~follows:
\[
\erank(L) = \min \left\lbrace \rank(L') : \| L' - L \|_\infty \le \epsilon \right\rbrace~,
\]
namely it is the lowest rank of any matrix that $\epsilon$-approximates $L$ entry-wise.

Under this setting, \cite{hazan2016online} give an online learning algorithm that gives $O(d\sqrt{T})$ regret if $L$ is of (0-)rank $d$. For the case of $\epsilon$-approximate rank, the authors give a regret bound of $O(\sqrt{dT} + \epsilon \sqrt{T \log K})$ for the stochastic case, and leave the problem of obtaining a similar bound in the online case as an open issue. 
In this section, we show an application of our algorithm to this latter setting, described by the following corollary.
\begin{corollary}
\label{corr:lowrankcover}
Let $0 < \epsilon \le 1/4$. Suppose that $T \le K$ and $\erank(L) \le d$, then \cref{alg:expinf} applied to this setting with accuracy $4\epsilon$ gives a regret bound of 
\[
\Regret = \tO \left(\epsilon T + (c / \epsilon)^d + \sqrt{(c/\epsilon)^d T} \right)~,
\]
for an absolute constant $c$.
\end{corollary}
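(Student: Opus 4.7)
The strategy is to bound the covering number $\N(4\epsilon, L_T)$ by $(c/\epsilon)^d$ and then invoke \cref{thm:alganalysis} with accuracy parameter $4\epsilon \le 1$; plugging this bound into the regret $\tO(4\epsilon\, T + \N(4\epsilon, L_T) + \sqrt{T\, \N(4\epsilon, L_T)})$ immediately gives the statement. So the entire task is to control $\N(4\epsilon, L_T)$; the assumption $T \le K$ plays no role in this argument beyond ensuring we are in the regime where the bound is of interest.

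To bound the covering number, I would first unpack the definition of $\erank$: there exists a rank-$d$ matrix $L' \in \R^{T \times K}$ with $\|L - L'\|_\infty \le \epsilon$. All columns of $L'$ then lie in a common $d$-dimensional subspace $V \subseteq \R^T$, and, since $\|L\|_\infty \le 1$, each column satisfies $\|L'_{\cdot,k}\|_\infty \le 1 + \epsilon$. Hence the columns of $L'$ sit inside the ball of radius $1 + \epsilon$ of the $d$-dimensional normed space $(V, \|\cdot\|_\infty)$. A standard volumetric argument shows that this ball admits a $\delta$-cover (by arbitrary points of $V$) of size at most $(3(1+\epsilon)/\delta)^d$; setting $\delta = \epsilon$ gives an abstract cover $C \subseteq V$ of size at most $(6/\epsilon)^d$.

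The remaining step is to pass from $C$, which consists of arbitrary points of $V$, to a cover by actual experts, as required by the definition of $\N$. I would use the usual trick: for each $c \in C$ having at least one column $L'_{\cdot,k}$ within $\epsilon$ of it, designate one such column as a representative, and let $S$ be the set of all such representatives, so $|S| \le |C|$. Two applications of the triangle inequality in $V$ then yield $\|L'_{\cdot,i} - L'_{\cdot,j}\|_\infty \le 2\epsilon$ whenever $j \in S$ is the representative for a point that covers $L'_{\cdot,i}$, and adding the two $\|L - L'\|_\infty \le \epsilon$ errors gives $\|L_{\cdot,i} - L_{\cdot,j}\|_\infty \le 4\epsilon$. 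Hence $S$ is a $4\epsilon$-cover of $L_T$, which establishes $\N(4\epsilon, L_T) \le (6/\epsilon)^d$, and setting $c = 6$ completes the proof.

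The only step requiring any real care is this final bookkeeping, converting a cover by arbitrary points into one by actual experts without losing more than a constant factor in the accuracy. The main conceptual content, however, is the observation that the approximate low-rank assumption forces the expert losses to lie essentially inside a single $d$-dimensional $\ell_\infty$-ball, whose covering number is well understood and controlled by the classical volumetric bound.
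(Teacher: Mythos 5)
Your proof is correct, but it takes a genuinely different route from the paper's. The paper pads $L$ to a $K \times K$ matrix (this is where the hypothesis $T \le K$ is actually used), invokes Theorem~3.2 of \citet{alon2013approximate} to obtain a set $S \subseteq \R^K$ of size $O(1/\epsilon)^d$ that $2\epsilon$-approximates the entire image $\{Lx : x \in \Delta\}$ of the simplex, and then converts this into a bound on $\N(4\epsilon, L_T)$ via a packing argument and \cref{lemma:coverpackduality}. You instead observe that only the columns $L e_i$ need to be covered, that their rank-$d$ approximations all lie in the radius-$(1+\epsilon)$ ball of a $d$-dimensional subspace of $(\R^T, \|\cdot\|_\infty)$, and apply the standard volumetric bound $(1 + 2r/\delta)^d$ for balls of a $d$-dimensional normed space, followed by the usual representative trick to turn an external cover into a cover by actual experts. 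Your accounting of the accuracy losses ($\epsilon$ for the abstract cover, another $\epsilon$ for passing to representatives, and $2\epsilon$ for the two applications of $\|L - L'\|_\infty \le \epsilon$, totalling $4\epsilon$) is right. Your route is more elementary and self-contained, yields an explicit constant $c = 6$ rather than the unspecified constant inherited from the black-box theorem, and---as you correctly note---does not use $T \le K$ at all, so it actually proves a slightly stronger statement; what the paper's route buys is a cover of all convex combinations of experts rather than just the experts themselves, which is more than the corollary requires.
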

Note that the bound above is only meaningful if it happens that $\omega(T^{-1/d}) \le \epsilon \le o(1)$. 
The bound hints on an exponential decay in the dependence on the approximate rank $d$ between the stochastic and online cases. Whether this gap can be removed using nontrivial algorithmic techniques remains an open issue and an interesting direction for future research.

Let us turn to prove \cref{corr:lowrankcover}, but in order to do so we shall need the following theorem.
\begin{theorem}[{\citealp[Theorem 3.2]{alon2013approximate}}]
\label{thm:alon}
Let $A$ be an $K \times K$ matrix with entries in $[-1, 1]$ and $\erank(A) = d$. 
Let $\Delta$ be the $(K-1)$-dimensional probability simplex. 
There is a finite set $S \subseteq \R^K$ such that
\[
\forall x \in \Delta,~\exists \tilde{x} \in S : \|Ax - A \tilde{x}\|_\infty \le 2 \epsilon~,
\]
and $|S| = O(1/ \epsilon)^d$. 
\end{theorem}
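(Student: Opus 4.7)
The plan is to reduce the covering problem in $\R^K$ to a covering problem in a genuinely $d$-dimensional normed space, by factoring the low-rank approximant of $A$. By $\erank(A) = d$, fix a matrix $A'$ of rank exactly $d$ with $\|A - A'\|_\infty \le \epsilon$ entrywise. For any $x \in \Delta$, $\|(A - A')x\|_\infty \le \epsilon\|x\|_1 = \epsilon$, so by two applications of the triangle inequality it suffices to find a set $S \subseteq \R^K$ of size $O(1/\epsilon)^d$ such that for every $x \in \Delta$ some $\tilde x \in S$ satisfies $\|A'x - A'\tilde x\|_\infty \le \epsilon$; the three resulting $\epsilon$'s fold into the stated $2\epsilon$ bound up to an absolute constant that I absorb into the hidden constant $c$ of $(c/\epsilon)^d$.

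Next, factor $A' = BC$ with $B \in \R^{K \times d}$ and $C \in \R^{d \times K}$, both of rank $d$. Writing $A'x = B(Cx)$, the image $A'(\Delta)$ is $B(C(\Delta))$, where $C(\Delta) = \conv(C_1,\ldots,C_K) \subseteq \R^d$ and $C_j$ is the $j$-th column of $C$. Define the map $\|z\|_B \eqdef \|Bz\|_\infty$ on $\R^d$; because $B$ has trivial kernel this is a bona fide norm. Each $\|C_j\|_B = \|A'_j\|_\infty \le 1 + \epsilon$, so $C(\Delta)$ lies inside the $\|\cdot\|_B$-ball of radius $1+\epsilon$ in $\R^d$, and an $\epsilon$-cover of $C(\Delta)$ in $\|\cdot\|_B$ immediately yields an $\ell_\infty$-cover of $A'(\Delta)$ of the same radius by applying $B$.

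The heart of the argument is then the standard volume bound in the $d$-dimensional normed space $(\R^d, \|\cdot\|_B)$: a maximal $\epsilon$-packing of the $(1+\epsilon)$-ball has size at most $(O(1)/\epsilon)^d$ by comparing the volumes of disjoint $(\epsilon/2)$-balls packed inside the $(1 + 3\epsilon/2)$-enlargement, and every such maximal packing is automatically an $\epsilon$-cover. Choosing it greedily inside $C(\Delta)$ keeps the centers $y^{(k)}$ in $C(\Delta)$, and I lift each one by picking some $\tilde x^{(k)} \in \Delta$ with $C\tilde x^{(k)} = y^{(k)}$, then set $S = \{\tilde x^{(k)}\}$. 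For any $x \in \Delta$, the covering property gives $k$ with $\|A'x - A'\tilde x^{(k)}\|_\infty = \|Cx - y^{(k)}\|_B \le \epsilon$, and the earlier triangle inequality closes the argument.

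The main obstacle is conceptual rather than computational: a direct $\ell_\infty$-covering of $A'(\Delta) \subseteq \R^K$ would appear to charge the ambient dimension $K$, since the $d$-dimensional image subspace $\mathrm{Im}(A')$ can sit inside $\R^K$ in an arbitrarily skewed way, making the restricted $\ell_\infty$ metric on $\mathrm{Im}(A')$ behave unpredictably relative to any intrinsic parametrization. The factorization $A' = BC$ is precisely what reroutes the problem through the normed space $(\R^d, \|\cdot\|_B)$, where a single volume argument genuinely sees only $d$ dimensions and delivers the claimed $(c/\epsilon)^d$ count.
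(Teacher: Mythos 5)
First, a point of comparison: the paper does not prove this statement at all --- it is imported verbatim as \citealp[Theorem 3.2]{alon2013approximate} and used as a black box in the proof of \cref{corr:lowrankcover} --- so your attempt can only be judged on its own merits. Your overall route is the natural (and, to my knowledge, the standard) one: fix a rank-$d$ matrix $A'$ with $\|A - A'\|_\infty \le \epsilon$ entrywise, factor $A' = BC$, observe that $\|z\|_B = \|Bz\|_\infty$ is a genuine norm on $\R^d$, run the volume packing-equals-covering argument on $C(\Delta)$ inside the $(1+\epsilon)$-ball of $(\R^d,\|\cdot\|_B)$, and lift the net points back to $\Delta$. All of these steps are correct as you state them.

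The gap is in the constants, and your proposed fix for it does not work. Your triangle inequality gives $\|Ax - A\tilde x\|_\infty \le \epsilon + \delta + \epsilon$, where $\delta$ is the net accuracy in $(\R^d,\|\cdot\|_B)$; with $\delta = \epsilon$ this is $3\epsilon$, not the stated $2\epsilon$. You claim the discrepancy can be ``absorbed into the hidden constant $c$ of $(c/\epsilon)^d$,'' but the error tolerance and the cardinality constant are independent parameters of the statement and cannot be traded against each other: no choice of $c$ turns a $3\epsilon$-covering guarantee into a $2\epsilon$ one. Nor can you rerun your argument at scale $2\epsilon/3$, because the rank-$d$ approximant is only guaranteed to exist at scale $\epsilon$ --- that is precisely what $\erank(A) = d$ means --- so the two ``approximation'' epsilons are immovable and your method can only ever reach $2\epsilon + \delta$ for $\delta > 0$. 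As a proof of the theorem exactly as quoted, this is a genuine (if small) defect; for the purposes of this paper it is harmless, since the proof of \cref{corr:lowrankcover} only needs a $O(\epsilon)$-covering of size $O(1/\epsilon)^d$ and would go through with \cref{alg:expinf} run at accuracy $6\epsilon$ instead of $4\epsilon$. One further remark: the ``conceptual obstacle'' you describe is not real. The restriction of $\|\cdot\|_\infty$ to the $d$-dimensional subspace $\mathrm{Im}(A')$ is itself a norm on that subspace, and the volume argument applies verbatim in $(\mathrm{Im}(A'), \|\cdot\|_\infty)$ no matter how the subspace sits inside $\R^K$; your factorization through $(\R^d, \|\cdot\|_B)$ is exactly an isometric coordinatization of this space, i.e.\ a convenience, not the ingredient that saves the argument from a dependence on $K$.
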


We shall now continue with the proof of the corollary.
\begin{proof}[Proof of \cref{corr:lowrankcover}]
Consider an application of \cref{alg:expinf} with accuracy $4 \epsilon$ in our problem, for which by \cref{thm:alganalysis} it suffices to bound the covering number $\N(4 \epsilon, L_T)$.

Consider padding the loss matrix $L \in [-1,1]^{T \times K}$ with $K-T$ rows of zero entries, in order to obtain a $K \times K$ matrix whose $\erank$ remains $d$. Since we can represent each expert $i$ by a standard basis vector $e_i \in \R^K$, the vector $L e_i$ represents the losses of expert $i$.
Thus, we can apply \cref{thm:alon} to $L$ and acquire a set $S \subseteq \R^K$ of size $O(1/\epsilon)^d$ with the following property:
\[
\forall i \in [K],~\exists \tilde{x} \in S : \|L e_i - L \tilde{x}\|_\infty \le 2 \epsilon~.
\]

Let $R$ be a maximal $4 \epsilon$-packing of $L_T$. By \cref{lemma:coverpackduality} we have $\N(L_T, 4 \epsilon) \le |R|$ so that it suffices to show a one-to-one mapping $\pi : R \mapsto S$, that would imply $|R| \le |S|$.
Indeed, define $\pi$ as follows: for each $i \in R$ let $\pi(i) = \arg\min_{\tilde{x} \in S} \| L e_i - L \tilde{x}\|_\infty$ breaking ties arbitrarily. 
To show that it is one-to-one, let $i,j \in R$ such that $\pi(i) = \pi(j)$, then
\[
\| L e_i - L e_j \|_\infty \le \| L e_i - L \pi(i) \|_\infty + \| L \pi(j) - L e_j \|_\infty \le 2 \epsilon + 2 \epsilon = 4 \epsilon~.
\]
Since $R$ is a $4 \epsilon$-packing it must be the case that $i = j$, as required.
\end{proof}

\subsection{Losses from a sparse dictionary}

Sparse dictionary learning \citep{elad2006image}, also known as sparse coding, is a widely used tool in machine learning, neuroscience and signal processing. Given an unlabeled dataset, this method approximates each data point by a linear combination of a small number of vectors from a set, called a dictionary. This modeling is motivated by the empirical success \citep{aharon2006rm,lee2007efficient}, as well as evidence that, for example, the neurons of the V1 optical cortex use similar representations \citep{olshausen1997sparse}. In the following we assume that the environment plays a strategy in which the losses of the experts can be approximated by such a sparse~representation.

Formally, let the environment play a strategy that satisfies the following. Suppose that the loss matrix $L \in [-1,1]^{T \times K}$ can be approximated by a decomposition $D \cdot V$, namely that it satisfies $\|L - D \cdot V\|_\infty \le \epsilon$. Here $D$, the dictionary, is a $T \times n$ matrix whose rows have 1-norm of at most 1. The matrix $V$ is an $n \times K$ matrix in which each column $i$, associated with expert $i$, is a $k$-sparse\footnote{A $k$-sparse vector is one that has at most $k$ nonzero entries.} vector $v_i$ such that $\|v_i\|_\infty \le 1$. Note that both matrices $D$ and $V$ are unknown to the learner and chosen by the environment in an adversarial manner.

This assumption entails that there is an approximate sparse representation for the losses of each expert. Indeed, the vector of losses of expert $i$ is $\epsilon$-approximated by $D v_i$, and since $v_i$ is $k$-sparse, $D v_i$ is a linear combination of at most $k$ of the columns of $D$.
The dictionary $D$ can be over-complete, which means that $n \ge T$ and that its columns are not necessarily orthogonal. This allows the vectors $D v_i$ to lie in different, yet possibly overlapping, $k$-dimensional subspaces of $\R^T$. As such this setting is a generalization of the low rank experts model of the previous section.

In this setting we have the following guarantee for our algorithm.
\begin{corollary}
\label{corr:dict}
Suppose that $0<\epsilon \le 1/4$. \cref{alg:expinf} applied to the setting above with accuracy $4\epsilon$ gives a regret bound of 
\[
\Regret = \tO \left(\epsilon T + (2n / \epsilon)^k + \sqrt{(2n/\epsilon)^k T} \right)~.
\]
\end{corollary}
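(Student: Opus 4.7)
The plan is to follow the same template as the proof of \cref{corr:lowrankcover}: by \cref{thm:alganalysis}, it suffices to establish that $\N(4\epsilon, L_T) = O((2n/\epsilon)^k)$. The new ingredient, replacing the algebraic input of \cref{thm:alon}, is an explicit discretization of the sparse coefficient space induced by the dictionary.

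First I would fix an $\epsilon$-net $G \subseteq [-1,1]$ containing $0$, with $|G| \le 2/\epsilon + 2$, and let $\mathcal{V}$ be the family of $k$-sparse vectors in $\R^n$ whose entries lie in $G$. A direct count gives $|\mathcal{V}| \le \binom{n}{k}\,|G|^k = O((2n/\epsilon)^k)$. Given any expert $i$ with sparse representative $v_i$, rounding each nonzero entry of $v_i$ to the nearest point of $G$ produces $\tilde v_i \in \mathcal{V}$ with $\|v_i - \tilde v_i\|_\infty \le \epsilon$. The key step is to use the assumption that each row of $D$ has $\ell_1$-norm at most $1$, so that H\"older's inequality yields $\|D v_i - D \tilde v_i\|_\infty \le \|v_i - \tilde v_i\|_\infty \le \epsilon$; combined with $\|L - DV\|_\infty \le \epsilon$, this gives $\|L e_i - D \tilde v_i\|_\infty \le 2\epsilon$.

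I would then repeat the packing-duality argument from the proof of \cref{corr:lowrankcover}. Let $R$ be a maximal $4\epsilon$-packing of $L_T$, so that $\N(4\epsilon, L_T) \le |R|$ by \cref{lemma:coverpackduality}. Define $\pi : R \to \mathcal{V}$ by $\pi(i) \in \argmin_{\tilde v \in \mathcal{V}} \|L e_i - D \tilde v\|_\infty$; if $\pi(i) = \pi(j)$, the triangle inequality forces $\|L e_i - L e_j\|_\infty \le 4\epsilon$, which contradicts the packing property unless $i = j$. Hence $\N(4\epsilon, L_T) \le |R| \le |\mathcal{V}| = O((2n/\epsilon)^k)$, and substituting into \cref{thm:alganalysis} gives the claimed regret bound.

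The main obstacle---really the only non-mechanical step---is the H\"older bound: the row $\ell_1$-normalization of $D$ is precisely the condition that lets a coordinate-wise $\epsilon$-grid on coefficients pull back to an $\epsilon$-perturbation in the infinity norm on the induced loss vectors, with no hidden dependence on the operator norm of $D$ or on $T$. Beyond that, everything reduces to the same packing-to-cover conversion used in the low-rank experts argument, with the abstract set $S$ from \cref{thm:alon} replaced by our explicit sparse-coefficient grid $\mathcal{V}$.
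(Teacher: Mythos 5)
Your proof is correct and follows essentially the same route as the paper's: reduce to bounding $\N(4\epsilon, L_T)$ via \cref{thm:alganalysis}, use the row $\ell_1$-normalization of $D$ (H\"older) to transfer an $\infty$-norm perturbation of the sparse coefficients to the loss vectors, and count $\binom{n}{k}\cdot O(1/\epsilon)^k$ grid points. If anything you are slightly more careful than the paper, which asserts that covering the coefficient set suffices without spelling out the packing-injection step you borrow from \cref{corr:lowrankcover}; that step is needed because a cover of $L_T$ must consist of actual experts, so your version closes a small gap rather than introducing one.
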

In particular, note that the bound is meaningful when $\omega(nT^{-1/k}) \le \epsilon \le o(1)$.

\begin{proof}[Proof of \cref{corr:dict}]
Once again, in view of \cref{thm:alganalysis} it suffices to bound the covering number~$\N(4\epsilon, L_T)$.
As in the proof of \cref{corr:lowrankcover}, associate with each expert $i$ the standard basis vector $e_i \in \R^K$ and recall that $L e_i \in [-1,1]^T$ is the vector of losses associated with expert $i$.
Let $i,j$ be any two experts, then we have
\begin{align*}
\|Le_i - L e_j \|_\infty &\le \underbrace{\| L e_i - D v_i \|_\infty}_{\le \epsilon} + \underbrace{\| D v_i - D v_j \|_\infty}_{\le \|v_i - v_j\|_\infty} + \underbrace{\|D v_j - L e_j \|_\infty}_{\le \epsilon} \\
&\le \| v_i - v_j \|_\infty + 2 \epsilon~,
\end{align*}
where the last inequality is by our assumption that the rows of $D$ have 1-norm of at most 1. 
Therefore, it suffices to $2\epsilon$-cover $S$, the set of all $k$-sparse vectors $v$ that satisfy $\|v\|_\infty \le 1$ .

The set $S$ is a union of $\binom{n}{k}$ $k$-dimensional cubes of the form $[-1,1]^k$. By a well known result on covering numbers, each such cube can be covered by at most $(1 + 1/\epsilon)^k \le (2/\epsilon)^k$ cubes of the form $[-2\epsilon,2\epsilon]^k$.
This entails that $\N(\epsilon, L_T) \le \binom{n}{k} (2/\epsilon)^k \le (2 n / \epsilon)^k$, and plugging the latter into the bound \cref{thm:alganalysis} gives the desired result.
\end{proof}

\subsection{Experts with bounded variation}
\label{sec:boundedvar}

For this section, let us assume that the variation of all of the experts is bounded by $V$.
Well known results about the covering numbers of functions of bounded variation \citep{bartlett1994fat} show that in the stochastic i.i.d case, it is possible to learn such functions while suffering only $\tO(\sqrt{VT})$ regret. However, the situation is drastically different in the online case.

For motivation consider the case of binary losses, then $V/2$ is a bound on the number of times the loss of an expert changes from -1 to 1 or vice versa, during the $T$ rounds of the game. Therefore, the number of experts with distinct losses is $2 \sum_{i=0}^{V/2} \binom{T-1}{i} = O(T^{V/2})$. This entails that by \cref{corr:binary}, we have the a  guarantee of $\tO( T^{V/2})$ on the regret of \cref{alg:expinf}, which is trivial even for $V = 2$! 
This leaves us with the following problem: does there exists an algorithm for online learning that attains $\tO(\sqrt{VT})$ regret when the variation of all experts is bounded by~$V$? 

We answer this question negatively. 
First note that the interesting regime is when $T < \log K$, otherwise we can obtain a tight $\Theta(\sqrt{V \log K})$ bound of \cite{hazan2010extracting}. However, if $T < \log K$, even if $V = O(1)$ we cannot expect the desired regret bound, as shown in the following~result.

\begin{theorem}
Suppose $T < \log_2 K$ and that $V(i) \le 2$ for every expert $i$. Then there is a randomized environment that forces any learner to obtain a regret of at least $T$ in expectation.
\end{theorem}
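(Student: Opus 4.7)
Since $T<\log_2 K$ we have $K>2^T$, so we may identify $2^T$ of the experts with distinct binary strings $s\in\{0,1\}^T$ (any remaining experts can be taken as duplicates of an arbitrary expert and ignored). The plan is to construct a randomized environment indexed by a hidden uniform $\sigma\in\{0,1\}^T$, verify the variation constraint, exhibit an expert with cumulative loss $-T$, and then lower bound the learner's cumulative loss by $0$ using that $\sigma_t$ is a fresh uniform coin at round $t$.

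\textbf{Construction.} Draw $\sigma\in\{0,1\}^T$ uniformly at random at the start of the game, and at each round $t$ set
\[
\ell_t(s) \eq \mycases{-1}{if $s_\tau = \sigma_\tau$ for all $\tau\le t$,}{+1}{otherwise.}
\]
Equivalently, expert $s$ receives loss $-1$ until the first round $\tau$ at which $s_\tau \ne \sigma_\tau$, and loss $+1$ on every round from $\tau$ onward.

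\textbf{Variation check and best expert.} For $s=\sigma$ the loss is constantly $-1$, so $V(\sigma)=0$. For $s\ne \sigma$ the loss sequence is a block of $-1$'s followed by a block of $+1$'s with exactly one transition, so $V(s)=2$; the bounded-variation hypothesis holds. Moreover, expert $s=\sigma$ achieves cumulative loss $-T$, so $\inf_s \sum_t \ell_t(s)\leq -T$ almost surely. It therefore suffices to show $\E\bigl[\sum_t \ell_t(I_t)\bigr]\ge 0$.

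\textbf{Lower bound on the learner's loss.} In the full-information model the learner's distribution $p_t$ over experts is a randomized function of the previous loss vectors $\ell_1,\dots,\ell_{t-1}$, which in turn are a deterministic function of $\sigma_{1:t-1}$ together with the fixed expert labels. Hence conditional on $\sigma_{1:t-1}$ the learner's draw $I_t$ is independent of $\sigma_t$, while $\sigma_t$ remains uniform on $\{0,1\}$. Writing $\ell_t(s) = 1 - 2\,\ind{s_{1:t}=\sigma_{1:t}}$,
\begin{align*}
\E\bigl[\ell_t(I_t)\,\big|\,\sigma_{1:t-1}\bigr]
&\eq 1 - 2\,\Pr\bigl[(I_t)_{1:t}=\sigma_{1:t}\,\big|\,\sigma_{1:t-1}\bigr] \\
&\eq 1 - \Pr\bigl[(I_t)_{1:t-1}=\sigma_{1:t-1}\,\big|\,\sigma_{1:t-1}\bigr] \\
&\geq 1 - 1 \eq 0,
\end{align*}
where the second line averages over the fair coin $\sigma_t$, which is independent of $I_t$ after conditioning on $\sigma_{1:t-1}$. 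Summing over $t=1,\dots,T$ yields $\E[\sum_t \ell_t(I_t)] \ge 0$ and hence $\E[\Regret]\ge T$.

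\textbf{Main obstacle.} The only delicate step is establishing that $I_t$ is independent of $\sigma_t$ given $\sigma_{1:t-1}$, which is what licenses the factor of one-half when marginalizing over $\sigma_t$. In the full-information model this holds because the observed loss vectors depend on $\sigma$ only through $\sigma_{1:t-1}$, so the learner's state before round $t$ does too; once this is set up, the lower bound reduces to the short probability calculation above.
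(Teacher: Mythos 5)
Your proposal is correct and is essentially the paper's own argument: the adversary kills half of the surviving experts uniformly at random each round (your binary-string indexing by a hidden $\sigma$ is just a concrete realization of this), so the learner's conditional expected loss per round is at least $0$ while one expert retains loss $-1$ throughout, and $T<\log_2 K$ guarantees such an expert exists. Your write-up is in fact more careful than the paper's on the one delicate point --- that $I_t$ is conditionally independent of the fresh coin $\sigma_t$ given the history --- which the paper asserts only implicitly.
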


\begin{proof}
Our construction is as following. We start from setting the loss of all experts to -1. At each round, we pick a subset of experts whose loss was -1 so far, set it to 1 and it will stay 1 for the remainder of the game. This will make sure that the variation of all experts is indeed at most 2.

On the first round we pick half of the experts uniformly at random and set their loss to 1. Therefore, on the first round, the loss of any learner is exactly 0 in expectation. In the second round, we pick half of the experts whose loss was -1 in the first round and set their loss to 1. Once again, any learner must suffer a loss of at least 0 in expectation on the second round. We keep doing so for the $T$ rounds of the game.

Thus, the expected regret of the learner is at least $T$ since her expected loss is at least 0, but there is at least one expert with a cumulative loss of $-T$.
\end{proof}

\section{Discussion and Open Problems}

In this work we have shown an algorithm that obtains a regret bound that is independent of the number of experts. This dependence is replaced by a certain covering number that governs the complexity of the observed sequence of loss functions.
We have also shown how to automatically tune the accuracy parameter of our algorithm. Finally, we have presented a number of applications of our algorithm, that include binary losses, low rank experts and experts with bounded variation.

Unfortunately for many important applications, the covering number used by our algorithm can typically be very large. Given that ideally we would like to choose $\epsilon = o(1)$, it is an interesting direction for future work to try and to find a simple way to quantify which classes of environments produce small covering numbers for such values of $\epsilon$.
In addition, comparing our result to that of \cite{hazan2016online} in their setting, our bound is exponentially worse in the dimension of the loss matrix. It remains an open problem to find an algorithm that has a tight regret bound in both cases. 

Another interesting direction to explore is the connection between our setting and online compression. Indeed, by setting the losses into a matrix $L \in [-1,1]^{T \times K}$ our problem is equivalent to approximating this matrix with a small number of columns in an online fashion. Therefore any guarantee provided by an online compression algorithm that solves this problem, implies an improvement in guarantee over the regret.

%

%


\bibliographystyle{abbrvnat}
\bibliography{bib}

\begin{thebibliography}{33}
\providecommand{\natexlab}[1]{#1}
\providecommand{\url}[1]{\texttt{#1}}
\expandafter\ifx\csname urlstyle\endcsname\relax
  \providecommand{\doi}[1]{doi: #1}\else
  \providecommand{\doi}{doi: \begingroup \urlstyle{rm}\Url}\fi

\bibitem[Aharon et~al.(2006)Aharon, Elad, and Bruckstein]{aharon2006rm}
M.~Aharon, M.~Elad, and A.~Bruckstein.
\newblock $ rm k $-svd: An algorithm for designing overcomplete dictionaries
  for sparse representation.
\newblock \emph{IEEE Transactions on signal processing}, 54\penalty0
  (11):\penalty0 4311--4322, 2006.

\bibitem[Alon et~al.(2013)Alon, Lee, Shraibman, and
  Vempala]{alon2013approximate}
N.~Alon, T.~Lee, A.~Shraibman, and S.~Vempala.
\newblock The approximate rank of a matrix and its algorithmic applications:
  approximate rank.
\newblock In \emph{Proceedings of the forty-fifth annual ACM symposium on
  Theory of computing}, pages 675--684. ACM, 2013.

\bibitem[Bartlett et~al.(1994)Bartlett, Long, and Williamson]{bartlett1994fat}
P.~L. Bartlett, P.~M. Long, and R.~C. Williamson.
\newblock Fat-shattering and the learnability of real-valued functions.
\newblock In \emph{Proceedings of the seventh annual conference on
  Computational learning theory}, pages 299--310. ACM, 1994.

\bibitem[Ben-David et~al.(2009)Ben-David, P{\'a}l, and
  Shalev-Shwartz]{ben2009agnostic}
S.~Ben-David, D.~P{\'a}l, and S.~Shalev-Shwartz.
\newblock Agnostic online learning.
\newblock In \emph{COLT}, 2009.

\bibitem[Cesa-Bianchi and Lugosi(2006)]{cesa2006prediction}
N.~Cesa-Bianchi and G.~Lugosi.
\newblock \emph{Prediction, learning, and games}.
\newblock Cambridge university press, 2006.

\bibitem[Cesa-Bianchi et~al.(1997)Cesa-Bianchi, Freund, Haussler, Helmbold,
  Schapire, and Warmuth]{cesa1997use}
N.~Cesa-Bianchi, Y.~Freund, D.~Haussler, D.~P. Helmbold, R.~E. Schapire, and
  M.~K. Warmuth.
\newblock How to use expert advice.
\newblock \emph{Journal of the ACM (JACM)}, 44\penalty0 (3):\penalty0 427--485,
  1997.

\bibitem[Cesa-Bianchi et~al.(1999)Cesa-Bianchi, Lugosi,
  et~al.]{cesa1999prediction}
N.~Cesa-Bianchi, G.~Lugosi, et~al.
\newblock On prediction of individual sequences.
\newblock \emph{The Annals of Statistics}, 27\penalty0 (6):\penalty0
  1865--1895, 1999.

\bibitem[Cesa-Bianchi et~al.(2007)Cesa-Bianchi, Mansour, and
  Stoltz]{cesa2007improved}
N.~Cesa-Bianchi, Y.~Mansour, and G.~Stoltz.
\newblock Improved second-order bounds for prediction with expert advice.
\newblock \emph{Machine Learning}, 66\penalty0 (2-3):\penalty0 321--352, 2007.

\bibitem[Chaudhuri et~al.(2009)Chaudhuri, Freund, and
  Hsu]{chaudhuri2009parameter}
K.~Chaudhuri, Y.~Freund, and D.~J. Hsu.
\newblock A parameter-free hedging algorithm.
\newblock In \emph{Advances in neural information processing systems}, pages
  297--305, 2009.

\bibitem[Chernov and Vovk(2010)]{chernov2010prediction}
A.~Chernov and V.~Vovk.
\newblock Prediction with advice of unknown number of experts.
\newblock \emph{arXiv preprint arXiv:1006.0475}, 2010.

\bibitem[Chiang et~al.(2012)Chiang, Yang, Lee, Mahdavi, Lu, Jin, and
  Zhu]{chiang2012online}
C.-K. Chiang, T.~Yang, C.-J. Lee, M.~Mahdavi, C.-J. Lu, R.~Jin, and S.~Zhu.
\newblock Online optimization with gradual variations.
\newblock In \emph{COLT}, pages 6--1, 2012.

\bibitem[De~Rooij et~al.(2014)De~Rooij, Van~Erven, Gr{\"u}nwald, and
  Koolen]{de2014follow}
S.~De~Rooij, T.~Van~Erven, P.~D. Gr{\"u}nwald, and W.~M. Koolen.
\newblock Follow the leader if you can, hedge if you must.
\newblock \emph{Journal of Machine Learning Research}, 15\penalty0
  (1):\penalty0 1281--1316, 2014.

\bibitem[Elad and Aharon(2006)]{elad2006image}
M.~Elad and M.~Aharon.
\newblock Image denoising via sparse and redundant representations over learned
  dictionaries.
\newblock \emph{IEEE Transactions on Image processing}, 15\penalty0
  (12):\penalty0 3736--3745, 2006.

\bibitem[Gofer et~al.(2013)Gofer, Cesa-Bianchi, Gentile, and
  Mansour]{gofer2013regret}
E.~Gofer, N.~Cesa-Bianchi, C.~Gentile, and Y.~Mansour.
\newblock Regret minimization for branching experts.
\newblock In \emph{COLT}, volume~13, pages 618--638, 2013.

\bibitem[Gy{\"o}rgy et~al.(2005)Gy{\"o}rgy, Linder, and
  Lugosi]{gyorgy2005tracking}
A.~Gy{\"o}rgy, T.~Linder, and G.~Lugosi.
\newblock Tracking the best of many experts.
\newblock In \emph{International Conference on Computational Learning Theory},
  pages 204--216. Springer, 2005.

\bibitem[Hazan and Kale(2010)]{hazan2010extracting}
E.~Hazan and S.~Kale.
\newblock Extracting certainty from uncertainty: Regret bounded by variation in
  costs.
\newblock \emph{Machine learning}, 80\penalty0 (2-3):\penalty0 165--188, 2010.

\bibitem[Hazan et~al.(2016)Hazan, Koren, Livni, and Mansour]{hazan2016online}
E.~Hazan, T.~Koren, R.~Livni, and Y.~Mansour.
\newblock Online learning with low rank experts.
\newblock \emph{arXiv preprint arXiv:1603.06352}, 2016.

\bibitem[Helmbold and Warmuth(2009)]{helmbold2009learning}
D.~P. Helmbold and M.~K. Warmuth.
\newblock Learning permutations with exponential weights.
\newblock \emph{Journal of Machine Learning Research}, 10\penalty0
  (Jul):\penalty0 1705--1736, 2009.

\bibitem[Koolen et~al.(2010)Koolen, Warmuth, Kivinen,
  et~al.]{koolen2010hedging}
W.~M. Koolen, M.~K. Warmuth, J.~Kivinen, et~al.
\newblock Hedging structured concepts.
\newblock 2010.

\bibitem[Lee et~al.(2007)Lee, Battle, Raina, and Ng]{lee2007efficient}
H.~Lee, A.~Battle, R.~Raina, and A.~Y. Ng.
\newblock Efficient sparse coding algorithms.
\newblock \emph{Advances in neural information processing systems},
  19:\penalty0 801, 2007.

\bibitem[Littlestone and Warmuth(1989)]{littlestone1989weighted}
N.~Littlestone and M.~K. Warmuth.
\newblock The weighted majority algorithm.
\newblock In \emph{Foundations of Computer Science, 1989., 30th Annual
  Symposium on}, pages 256--261. IEEE, 1989.

\bibitem[Olshausen and Field(1997)]{olshausen1997sparse}
B.~A. Olshausen and D.~J. Field.
\newblock Sparse coding with an overcomplete basis set: A strategy employed by
  v1?
\newblock \emph{Vision research}, 37\penalty0 (23):\penalty0 3311--3325, 1997.

\bibitem[Pollard(1990)]{pollard1990empirical}
D.~Pollard.
\newblock Empirical processes: theory and applications.
\newblock In \emph{NSF-CBMS regional conference series in probability and
  statistics}, pages i--86. JSTOR, 1990.

\bibitem[Rakhlin and Sridharan(2013)]{rakhlin2013online}
A.~Rakhlin and K.~Sridharan.
\newblock Online learning with predictable sequences.
\newblock In \emph{COLT}, pages 993--1019, 2013.

\bibitem[Rakhlin et~al.(2010)Rakhlin, Sridharan, and Tewari]{rakhlin2010online}
A.~Rakhlin, K.~Sridharan, and A.~Tewari.
\newblock Online learning: Random averages, combinatorial parameters, and
  learnability.
\newblock In \emph{Advances in Neural Information Processing Systems}, pages
  1984--1992, 2010.

\bibitem[Rogers(1964)]{rogers1964packing}
C.~A. Rogers.
\newblock Packing and covering.
\newblock 1964.

\bibitem[Roman(1992)]{roman1992coding}
S.~Roman.
\newblock \emph{Coding and information theory}, volume 134.
\newblock Springer Science \& Business Media, 1992.

\bibitem[Sani et~al.(2014)Sani, Neu, and Lazaric]{sani2014exploiting}
A.~Sani, G.~Neu, and A.~Lazaric.
\newblock Exploiting easy data in online optimization.
\newblock In \emph{Advances in Neural Information Processing Systems}, pages
  810--818, 2014.

\bibitem[Shalev-Shwartz and Ben-David(2014)]{shalev2014understanding}
S.~Shalev-Shwartz and S.~Ben-David.
\newblock \emph{Understanding machine learning: From theory to algorithms}.
\newblock Cambridge university press, 2014.

\bibitem[Takimoto and Warmuth(2003)]{takimoto2003path}
E.~Takimoto and M.~K. Warmuth.
\newblock Path kernels and multiplicative updates.
\newblock \emph{Journal of Machine Learning Research}, 4\penalty0
  (Oct):\penalty0 773--818, 2003.

\bibitem[van Erven and Koolen(2016)]{van2016metagrad}
T.~van Erven and W.~M. Koolen.
\newblock Metagrad: Multiple learning rates in online learning.
\newblock In \emph{Advances in Neural Information Processing Systems}, pages
  3666--3674, 2016.

\bibitem[Vapnik(1998)]{vapnik1998statistical}
V.~N. Vapnik.
\newblock \emph{Statistical learning theory}, volume~1.
\newblock Wiley New York, 1998.

\bibitem[Vovk(1995)]{vovk1995game}
V.~G. Vovk.
\newblock A game of prediction with expert advice.
\newblock In \emph{Proceedings of the eighth annual conference on Computational
  learning theory}, pages 51--60. ACM, 1995.

\end{thebibliography}

\appendix

\section{Additional Proofs}

\subsection{Proof of \cref{lemma:coverpackduality}}
\label{sec:proofdualitylemma}

\begin{proof}[Proof of \cref{lemma:coverpackduality}]
For the lower bound let $S$ be a $2 \epsilon$-packing of $\ell$ and $V$ be an $\epsilon$-cover of $\ell$. 
Define the following distance function between experts:
\[
d(i,j) = \max_{t \in [T]} |\ell_t(i) - \ell_t(j)|~.
\]
Now, we define a mapping $\pi : S \mapsto V$ by $\pi(i) = \argmin_{j \in V} d(i,j)$ (breaking ties arbitrarily), namely for every expert $i \in S$ we take $\pi(i)$ to be the expert in $V$ that is closest to $i$ according to $d$.

Note that it suffices for us to show that $\pi$ is one-to-one as this will show that $|S| \le |V|$. 
Indeed, suppose $i,j \in S$ are such that $\pi(i) = \pi(j)$ and $i \neq j$. In particular we have $|\ell_t(i) - \ell_t(\pi(i))| \le \epsilon$ and $|\ell_t(j) - \ell_t(\pi(j))| \le \epsilon$ for all $t \in [T]$, and therefore
\[
|\ell_t(i) - \ell_t(j)| \le |\ell_t(i) - \ell_t(\pi(i))| + |\ell_t(\pi(j)) - \ell_t(j)| \le 2 \epsilon~.
\] 
However, there exists $t \in [T]$ such that $|\ell_t(i) - \ell_t(j)| > 2 \epsilon$, thus reaching contradiction.

We now turn to prove the upper bound. Let $S$ be a maximal $\epsilon$-packing, which we will show is also an $\epsilon$-cover.
Suppose otherwise, then there is an expert $i$ such that for every $j \in S$ we have $d(i,j) > \epsilon$. In particular $i \not\in S$.

Consider the set $S' = S \cup \{i\}$ obtained by adding $i$ to $S$. This set is also an $\epsilon$-packing therefore reaching a contradiction.
\end{proof}

\subsection{Technical Lemma}

\begin{lemma}
\label{lemma:logsum}
Let $1 = a_1 < a_2 < \ldots < a_n$ be a sequence of $n$ natural numbers. Then,
\[
\sum_{i=1}^n \log a_i \le 2 a_n \log a_n~.
\]
\end{lemma}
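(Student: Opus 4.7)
The claim is quite loose, so I would aim for the simplest possible counting argument rather than anything like Stirling's approximation or an integral comparison.

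The plan is to exploit the fact that $a_1, a_2, \ldots, a_n$ are \emph{distinct} natural numbers, each at most $a_n$. First I would observe that since there are only $a_n$ natural numbers in the interval $[1, a_n]$, a strictly increasing sequence inside this interval can have at most $a_n$ terms, giving the bound $n \le a_n$.

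Next, since $\log$ is monotone and every $a_i \le a_n$, I would use the trivial termwise upper bound
\[
\sum_{i=1}^n \log a_i \;\le\; \sum_{i=1}^n \log a_n \;=\; n \log a_n.
\]
Combining this with $n \le a_n$ from the first step yields $\sum_{i=1}^n \log a_i \le a_n \log a_n$, which is already stronger than the claimed $2 a_n \log a_n$ (the factor of $2$ is merely slack inherited from how the lemma is invoked in the proof of \cref{thm:alganalysis}).

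There is essentially no obstacle here: the lemma holds by counting plus monotonicity of $\log$. One could alternatively prove a tighter version via $\sum_{i=1}^n \log a_i \le \log(a_n!) \le a_n \log a_n$, but this yields nothing beyond what the two-line argument already gives, so I would simply present the counting proof.
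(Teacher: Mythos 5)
Your proof is correct, and it takes a genuinely different (and simpler) route than the paper's. You use a counting argument: since the $a_i$ are distinct natural numbers in $[1, a_n]$, there are at most $a_n$ of them, so $n \le a_n$, and then the termwise bound $\log a_i \le \log a_n$ gives $\sum_{i=1}^n \log a_i \le n \log a_n \le a_n \log a_n$, which even improves the claimed constant from $2$ to $1$. The paper instead uses the convexity of $f(x) = x \log x - x$: the gradient inequality $f(a_{i+1}) - f(a_i) \ge \log(a_i)(a_{i+1} - a_i) \ge \log a_i$ telescopes to give $\sum_{i=1}^{n-1} \log a_i \le f(a_n) - f(a_1) \le a_n \log a_n$, and then the last term $\log a_n$ is absorbed into the factor of $2$. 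The paper's argument is essentially an integral-comparison bound that does not need the terms to be bounded by their count (it would still work if the $a_i$ grew much faster than $i$), but for the statement as given your counting argument is strictly more elementary and yields a sharper constant; both are fully rigorous and either suffices for the application in the proof of \cref{thm:alganalysis}.
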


\begin{proof}
Consider the convex function $f(x) = x \log(x) - x$. Then for any $i = 1,2,\ldots,n-1$ we have
\[
f(a_{i+1}) - f(a_{i}) \ge f'(a_i) (a_{i+1} - a_i) = \log(a_i)(a_{i+1} - a_i) \ge \log a_i~.
\]
Summing,
\begin{align*}
\sum_{i=1}^{n-1} \log a_i &\le \sum_{i=1}^{n-1} f(a_{i+1}) - f(a_{i}) = f(a_n) - f(a_1) \\
&= (a_n \log(a_n) - a_n) - (a_1 \log a_1 - a_1) \le a_n \log a_n~,
\end{align*}
and thus
\[
\sum_{i=1}^n \log a_i \le a_n \log a_n + \log a_n \le 2 a_n \log a_n~.
\]
\end{proof}

\end{document}